%
\documentclass[runningheads]{llncs}
\usepackage[misc]{ifsym} 
\usepackage{graphicx}
\usepackage[export]{adjustbox} 
%
\usepackage{subcaption} 
\usepackage{pgfplots}
\pgfplotsset{compat=newest}
\usepackage{paralist} 
\usepackage{amsthm}
\usepackage{amsmath}
\usepackage{amssymb}
\usepackage{relsize}
\usepackage{tikz} 
\usepackage{varwidth}
\usetikzlibrary{arrows.meta,arrows}
\usetikzlibrary{shapes, positioning}

\theoremstyle{plain}
\newtheorem{thm}{Theorem} 
\theoremstyle{definition}
\newtheorem{defn}{Definition} 
\newtheorem{exmp}{Example}
\newtheorem{corol}{Corollary}
\usepackage{url}
\usepackage{multirow}
\urldef{\mailsa}\path|author1@someinst.something|
\urldef{\mailsb}\path|author2@someotherinst.something|

\usepackage{slashbox}

\usepackage{multicol}
\usepackage{algorithm} 
\usepackage[ruled, vlined, linesnumbered,algo2e]{algorithm2e}
\usepackage[noend]{algpseudocode}
\usepackage{etoolbox}

\makeatletter
\newcommand*{\algrule}[1][\algorithmicindent]{%
  \makebox[#1][l]{%
    \hspace*{.2em}
    \vrule height .75\baselineskip depth .25\baselineskip
  }
}

\newcount\ALG@printindent@tempcnta
\def\ALG@printindent{%
    \ifnum \theALG@nested>0
    \ifx\ALG@text\ALG@x@notext
    \else
    \unskip
    \ALG@printindent@tempcnta=1
    \loop
    \algrule[\csname ALG@ind@\the\ALG@printindent@tempcnta\endcsname]%
    \advance \ALG@printindent@tempcnta 1
    \ifnum \ALG@printindent@tempcnta<\numexpr\theALG@nested+1\relax
    \repeat
    \fi
    \fi
}
\patchcmd{\ALG@doentity}{\noindent\hskip\ALG@tlm}{\ALG@printindent}{}{\errmessage{failed to patch}}
\patchcmd{\ALG@doentity}{\item[]\nointerlineskip}{}{}{} 
\makeatother


\newcommand{\RNum}[1]{\uppercase\expandafter{\romannumeral #1\relax}} 

\pagestyle{plain}  
\usepackage{listings}
\usepackage[toc,page]{appendix}
\usepackage[utf8]{inputenc}
\usepackage{graphicx}
\graphicspath{ {figures/} }
\usepackage{array}
\usepackage{hyperref}
\usepackage[nottoc,notlot,notlof]{tocbibind}

\setcounter{secnumdepth}{3} 
\usepackage{chngcntr}

\let\emptyset\varnothing

\usepackage{etoolbox}
\patchcmd{\paragraph}{\itshape}{\bfseries\boldmath}{}{}
  
\newcommand{\keywords}[1]{\par\addvspace\baselineskip
\noindent\keywordname\enspace\ignorespaces#1}

\begin{document}

\title{FIBS: A Generic Framework for Classifying Interval-based Temporal Sequences}
%
%
\author{S. Mohammad Mirbagheri\textsuperscript{(\Letter)} \and Howard J. Hamilton}
%
%
%
\institute{Department of Computer Science, University of Regina, Regina, Canada\\
\email {\{mirbaghs,Howard.Hamilton\}}@uregina.ca}

\maketitle              

\begin{abstract}
We study the problem of classifying interval-based temporal sequences (IBTSs). Since common classification algorithms cannot be directly applied to IBTSs, the main challenge is to define a set of features that effectively represents the data such that classifiers can be applied. Most prior work utilizes frequent pattern mining to define a feature set based on discovered patterns. However, frequent pattern mining is computationally expensive and often discovers many irrelevant patterns. To address this shortcoming, we propose the FIBS framework for classifying IBTSs. FIBS extracts features relevant to classification from IBTSs based on relative frequency and temporal relations. To avoid selecting irrelevant features, a filter-based selection strategy is incorporated into FIBS.
Our empirical evaluation on eight real-world datasets demonstrates the effectiveness of our methods in practice. The results provide evidence that FIBS effectively represents IBTSs for classification algorithms, which contributes to similar or significantly better accuracy compared to state-of-the-art competitors.
It also suggests that the feature selection strategy is beneficial to FIBS's performance.
\keywords{Interval-based events, Temporal interval sequences, Feature-based classification framework}
\end{abstract}
\let\thefootnote\relax\footnotetext{This research was supported by funding from ISM Canada and the Natural Sciences and Engineering Research Council of Canada.}
\section{Introduction}
Interval-based temporal sequence (IBTS) data are collected from application domains in which events persist over intervals of time of varying lengths. Such domains include medicine \cite{2019medical,patelHepatit,medical}, sensor networks \cite{morchenSensor}, sign languages \cite{signLanguage}, and motion capture \cite{motion2016}. Applications that need to deal with this type of data are common in industrial, commercial, government, and health sectors. For example, some companies offer multiple service packages to customers that persist over varying periods of time and may be held concurrently. The sequence of packages that a customer holds can be represented as an IBTS.  

IBTSs can be obtained either directly from the applications or indirectly by data transformation. 
In particular, temporal abstraction of a univariate or multivariate time series may yield such data. Segmentation or aggregation of a time series into a succinct symbolic representation is called \textit{temporal abstraction} (TA) \cite{medical}. TA transforms a numerical time series to a symbolic time series. This high-level qualitative form of data provides a description of the raw time series data that is suitable for a human decision-maker (beacause it helps them to understand the data better) or for data mining. TA may be based on knowledge-based abstraction performed by a domain expert. An alternative is data-driven abstraction utilizing temporal discretization. Common unsupervised discretization methods are Equal Width, Symbolic Aggregate Approximation (SAX) \cite{SAX}, and Persist \cite{Persist}.  
Depending on the application scenario, symbolic time series may be categorized as point-based or as interval-based. Point-based data reflect scenarios in which events happen instantaneously or events are considered to have equal time intervals. Duration has no impact on extracting patterns for this type. Interval-based data, which is the focus of this study, reflect scenarios where events have unequal time intervals; here, duration plays an important role. 
Fig. \ref{obtain} depicts the process of obtaining interval-based temporal sequences.
\begin{figure}[!ht]
\centering
\includegraphics[scale=.09]{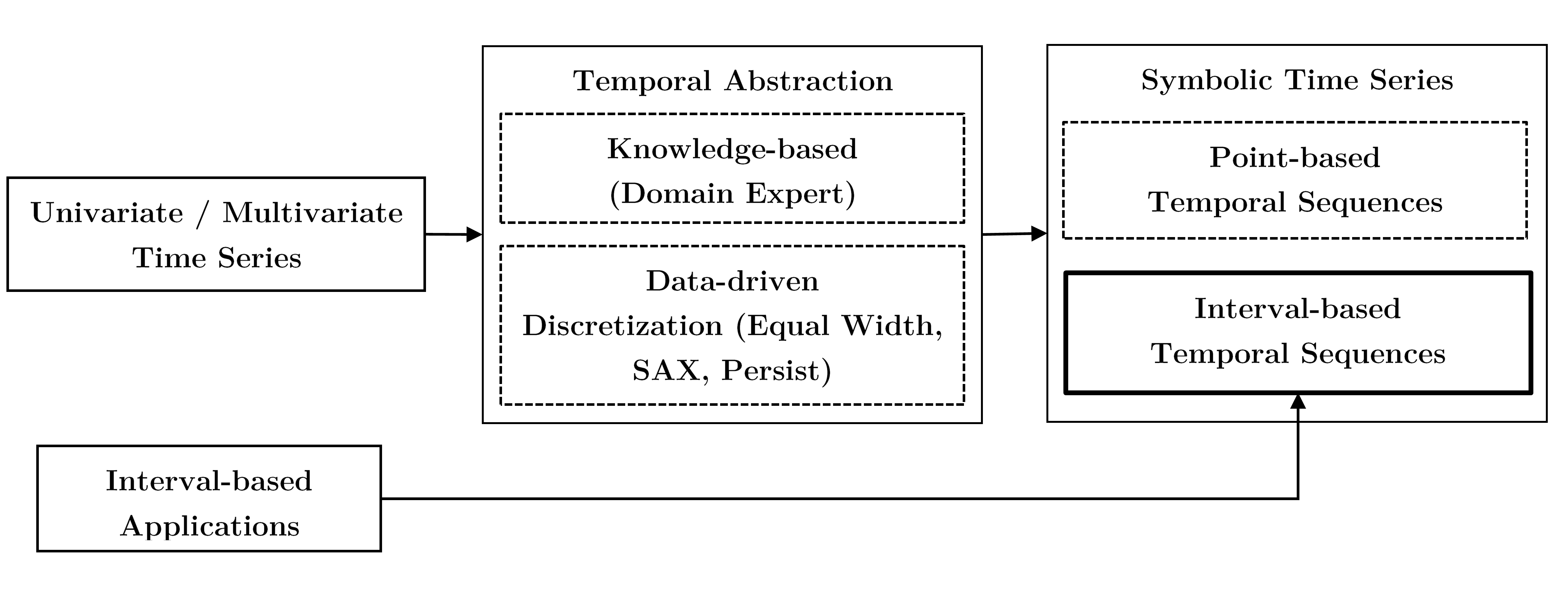}
\caption{ Process of obtaining interval-based temporal sequences}
\label{obtain}
\end{figure}

Classifying IBTSs is a relatively new research area. Although classification is an important machine learning task and has achieved great success in a wide range of applications (fields), the classification of IBTSs has not received much attention. A dataset of IBTSs contains longitudinal data where instances are described by a series of event intervals over time rather than features with a single value. Such dataset does not match the format required by standard classification algorithms to build predictive models.
Standard classification methods for multivariate time series (e.g., Hidden Markov Models \cite{Markov} and recurrent neural networks), time series similarity measures (e.g., Euclidean distance and Dynamic Time Warping (DTW) \cite{Warping}), and time series feature extraction methods (e.g., discrete Fourier transform, discrete wavelet transform and singular value decomposition) cannot be directly applied to such temporal data.

In this paper, we formalize the problem of classification of IBTSs based on feature-based classifiers and propose a new framework to solve this problem. The major contributions of this work are as follows: 
\begin{itemize}
\item We propose a generic framework named FIBS for classifying IBTSs. It represents IBTSs by extracting features relevant to classification from IBTSs based on relative frequency and temporal relations. 
\item To avoid selecting irrelevant features, we propose a heuristic filter-based feature selection strategy. FIBS utilizes this strategy to reduce the feature space and improve the classification accuracy.

\item We report on an experimental evaluation that shows the proposed framework is able to represent IBTSs effectively and classifying them efficiently.
\end{itemize}

The rest of the paper is organized as follows. Related work is presented in Section 2. Section 3 provides preliminaries and the problem statement. Section 4 presents the details of the FIBS framework and the feature selection strategy. Experimental results on real datasets and evaluation are given in Section 5. Section 6 presents conclusions.

\section{Related Work}
To date, only a few approaches to IBTS classification can be found in the literature. Most of them are domain-specific and based on frequent pattern mining techniques. 
Patel et al. \cite{patelHepatit} proposed the first work in the area of IBTS classification. They developed a method that first mines all frequent temporal patterns in an unsupervised setting and then uses some of these patterns as features for classification. They used Information Gain, a measure of discriminative power, as the selection criterion. After features are extracted, common classification techniques, such as decision trees or Support Vector Machines (SVM), are used to predict the classifications for unseen IBTSs.

Extracting features from frequent temporal patterns presents some challenges. Firstly, a well-known limitation of frequent pattern mining algorithms is that they extract too many frequent patterns, many of which are redundant or uninformative.
Several attempts have been made to address this limitation by discovering frequent temporal patterns in a supervised setting  \cite{batal2013,classification2015}. For example, Batal et al. \cite{batal2013} proposed the Minimal Temporal Patterns (MPTP) framework to filter out non-predictive and spurious temporal patterns to define a set of features for classifying Electronic Health Record (EHR) data. 
Secondly, discovering frequent patterns is computationally expensive. Lastly, classification based on features extracted from frequent patterns does not guarantee better performance than other methods.

In contrast to the approaches based on frequent pattern mining, a few studies offer similarity-based approaches to IBTS classification.
A robust similarity (or distance) measure allows machine learning tasks such as similarity search, clustering and classification to be performed. Towards this direction, Kostakis et al. \cite{Artemis} proposed two methods for comparing IBTSs. 
The first method, called the \textit{DTW-based} method, maps each IBTS to a set of vectors where one vector is created for each start- or end-point of any event-interval. The distances between the vectors are then computed using DTW.
The second method, called \textit{Artemis}, measures the similarity between two IBTSs based on the temporal relations that are shared between events. To do so, the IBTSs are mapped into a bipartite graph and the Hungarian algorithm is employed.
Kotsifakos et al. \cite{ibsm} proposed the Interval-Based Sequence Matching (\textit{IBSM}) method where each time point is represented by a binary vector that indicates which events are active at that particular time point. The distances between the vectors are then computed by Euclidean distance. In all three methods, IBTSs are classified by the $k$-NN classification algorithm and it was shown that IBSM outperforms the two other methods \cite{ibsm} with respect to both classification accuracy and runtime. 
Although the reported results are promising, such classifiers still suffer from major limitations. While Artemis ignores the duration of the event intervals, DWT-based and IBSM ignore the temporal relations between event intervals.

A feature-based framework for IBTS classification, called \textit{STIFE}, was recently proposed by Bornemann et al. \cite{2016stife}. STIFE extracts features using a combination of basic statistical metrics, shapelet \cite{shapelet} discovery and selection, and distance-based approaches. Then, a random forest is constructed using the extracted features to perform classification. It was shown that such a random forest achieves similar or better classification accuracy than $k$-NN using IBSM.

\section{Problem Statement}
We adapt definitions given in earlier research \cite{signLanguage} and describe the problem statement formally.

\begin{defn} (\textit{Event interval})
Let $\Sigma=\{\mathrm{A,B,...}\}$ denote a finite alphabet. A triple $e=(l,t_b,t_f)$ is called an event interval, where $l \in \Sigma$ is the event label and $t_b$, $t_f \in \mathbb{N}$, $(t_b<t_f)$ are the beginning and finishing times, respectively.
We also use $e.x$ to denote element $x$ of event interval $e$, e.g. $e.t_b$ is the beginning time of event interval $e$. The duration of event interval $e$ is $d(e)=t_f-t_b$.
\end{defn}
\begin{defn} (\textit{E-sequence})
An event-interval sequence or e-sequence $s=\langle e_1,e_2, \\ ...,e_m\rangle $ is a list of $m$ event intervals placed in ascending order based on their beginning times. If event intervals have equal beginning times, then they are ordered lexicographically by their labels. Multiple occurrences of an event are allowed in an e-sequence if they do not happen concurrently.
The duration of an e-sequence $s$ is $d(s)=max\{e_1.t_f, e_2.t_f, ..., e_m.t_f\}-min\{e_1.t_b,e_2.t_b, ..., e_m.t_b\}$. 
\end{defn}
\begin{defn} (\textit{E-sequence dataset})
An e-sequence dataset $D$ is a set of $n$ e-sequences $\{s_1,...,s_n\}$, where each e-sequence $s_i$ is associated with an unique identifier $i$.
\end{defn}
Table \ref{DB} depicts an e-sequence dataset consisting of four e-sequences with identifiers 1 to 4.

\begin{table}[H]
\centering
\caption{Example of an e-sequence dataset}
\label{DB}
 \setlength\extrarowheight{2pt} 
\begin{tabular}{|c|p{1cm}|p{1.7cm}|p{1.6cm}|l|}
\hline
\textbf{id} & \textbf{Event  Label} & \textbf{Beginning Time} & \textbf{Finishing  Time} &   \multicolumn{1}{>{\centering\arraybackslash}m{6cm}|}{\textbf{Event Sequence}}  \\ \hline
\multirow{4}{*}{1} & $A$ & 8& 28& \multirow{4}{*}{ \begin{tikzpicture} 
\draw (-2,0) -- node[above] {$A$} ++ (5.35,0) (1/3,-0.5)-- node[above] {$B$} ++(3/3,0) (2,-0.75)-- node[above] {$C$} ++(4/3,0) (2.3,-1.25)-- node[above] {$E$} ++(2/3,0);
 \end{tikzpicture}} \\ \cline{2-4}
        & $B$ & 18& 21& \\ \cline{2-4}
        & $C$ & 24& 28& \\ \cline{2-4}
        & $E$ & 25& 27& \\ \hline
\multirow{4}{*}{2} & $A$ & 1& 14& \multirow{4}{*}{ \begin{tikzpicture} 
\draw (-2,0) -- node[above] {$A$} ++ (3.35,0) (-2/3,-0.4)-- node[above] {$C$} ++(6/3,0) (-1/3,-0.85)-- node[above] {$E$} ++(3/3,0) (-1/3,-1.25)-- node[above] {$F$} ++(3/3,0);
 \end{tikzpicture} } \\ \cline{2-4}
        & $C$ & 6& 14& \\ \cline{2-4}
        & $E$ & 8& 11& \\ \cline{2-4}
        & $F$ & 8& 11& \\ \hline
\multirow{4}{*}{3} & $A$ & 6& 22& \multirow{4}{*}{ \begin{tikzpicture} 
\draw (-1.5,0) -- node[above] {$A$} ++ (4.65,0) (-1.5,-0.5)-- node[above] {$B$} ++(6/3,0) (.5,-0.75)-- node[above] {$C$} ++(6/3,0) (1.25,-1.25)-- node[above] {$E$} ++(2/3,0);
 \end{tikzpicture} } \\ \cline{2-4}
        & $B$ & 6& 14& \\ \cline{2-4}
        & $C$ & 14& 20& \\ \cline{2-4}
        & $E$ & 16& 18& \\ \hline
\multirow{5}{*}{4} & $A$ & 4& 24& \multirow{5}{*}{ \begin{tikzpicture} 
\draw (-2,0) -- node[above] {$A$} ++ (5.5,0) (-1.75,-0.4)-- node[above] {$B$} ++(4/3,0)
(-1.75,-0.9)-- node[above] {$D$} ++(6/3,0) (1.2,-1.2)-- node[above] {$C$} ++(6/3,0) (1.9,-1.7)-- node[above] {$E$} ++(2/3,0);
 \end{tikzpicture}} \\ \cline{2-4}
        & $B$ & 5& 10& \\ \cline{2-4}
        & $D$ & 5& 12& \\ \cline{2-4}
        & $C$ & 16& 22& \\ \cline{2-4}
        & $E$ & 18& 20& \\ \hline
	\end{tabular}
\end{table}

\paragraph{Problem Statement.}

Given an e-sequence dataset $D$, where each e-sequence is associated with a class label, the problem is to construct a representation of $D$ such that common feature-based classifiers are able to classify previously unseen e-sequences similar to those in $D$.  \\

\section{The FIBS Framework}
In this section, we introduce the FIBS framework for classifying e-sequence datasets. 
Many classification algorithms require data to be in a format reminiscent of a table, where rows represent instances (e-sequences) and columns represent features (attributes). Since an e-sequence dataset does not follow this format, we utilize FIBS to construct feature-based representations to enable standard classification algorithms to build predictive models. 

A \textit{feature-based representation} of a dataset has three components: a class label set, a feature set, and data instances. We first give a general definition of a feature-based representation based on these components \cite{feature}. 
\begin{defn}(\textit{Feature-based representation})
\label{ClassDFN}
A feature-based representation $K=(C,F,X)$ is defined as follows.
Let $C=\{c_1, c_2, ..., c_k\}$ be a set of $k$ class labels, $F=\{f_1, f_2, ..., f_z\}$ be a set of $z$ features (or attributes), $X=\{x_1,x_2, . . . ,x_n\}$ be a set of $n$ instances, and let $y_i \in C$ denote the class label of instance $x_i \in X$.
\end{defn}
In supervised settings, the set of class labels $C$ of the classes to which e-sequences belong is already known. Therefore, in order to form the feature-based representation, FIBS extracts the feature set $F$ and the instances $X$ from dataset $D$. To define the $F$ and $X$ components, we consider two alternative formulations based on relative frequency and temporal relations among the events. These formulations are explained in the following subsections.

\subsection{Relative Frequency}

\begin{defn} (\textit{Relative frequency})
The relative frequency $\mathrm{R}(s,l)$ of an event label $l \in \Sigma$ in an e-sequence $s \in D$, which is the duration-weighted frequency of the occurrences of $l$ in $s$, is defined as the accumulated durations of all event intervals with event label $l$ in $s$ divided by the duration of $s$. Formally:
\begin{equation}
\mathrm{R}(s,l)=\frac{1}{d(s)} \mathlarger{\sum}_{ e \in s  \ \wedge \ e.l=l  } d(e)
\end{equation} 
\end{defn}
Suppose that we want to specify a feature-based representation of an e-sequence dataset $D=\{s_1, s_2, ..., s_n\}$ using relative frequency. Let every unique event label $l \in \Sigma$ found in $D$ be used as a feature, i.e., let $F= \Sigma$.
Also let every e-sequence $s \in D$ be used as the basis for defining an instance $x \in X$. The feature-values of instance $x$ are specified as a vector containing the relative frequencies of every event label $l \in \Sigma$ in $s$. Formally, $X=\{x_1,x_2, . . . ,x_n\} \in \mathbb{R}^{n \times |\mathsmaller\Sigma|}$, $x_i=\langle \mathrm{R}(s_i,l_1), \mathrm{R}(s_i,l_2), ..., \mathrm{R}(s_i,l_{|\Sigma|}) \rangle.$ 
\begin{exmp}
Consider the feature-based representation that is constructed based on the relative frequency of the event labels in the e-sequence dataset shown in Table \ref{DB}.
Let the class label set $C$ be $\{+,-\}$ and the feature set $F$ be \{A, B, C, D, E, F\}. Assume that the class label of each of $s_1,s_3$, and $s_4$ is $+$ and the class label of $s_2$ is $-$. Table \ref{Relative} shows the resulting feature-based representation.
\label{ex1}
\end{exmp}

\begin{table}[]
\centering
\caption{Feature-based representation constructed based on relative frequency }
\label{Relative}
\setlength\tabcolsep{1.5pt}
\begin{tabular}{|c|c|c|c|c|c|c|}
\hline
\textbf{A}  &\textbf{B}  &\textbf{C}  &\textbf{D}  &\textbf{E}  &\textbf{F} & \textbf{Class} \\ \hline
1.00 & 0.15 & 0.20 & 0 & 0.10 & 0 & $+$	 \\ \hline	
1.00 & 0 & 0.62 & 0 & 0.23 & 0.23 & $-$	 \\ \hline	
1.00 & 0.50 & 0.38 & 0 & 0.13 & 0 & $+$	 \\ \hline	
1.00 & 0.25 & 0.30 & 0.35 & 0.10 & 0 & $+$	 \\ \hline	              
	\end{tabular}
\end{table}
\subsection{Temporal Relations}
Thirteen possible temporal relations between pairs of intervals were nicely categorized by Allen \cite{allen1983maintaining}. Table \ref{Allen} illustrates Allen's temporal relations. Ignoring the ``equals'' relation, six of the relations are inverses of the other six. We emphasize seven temporal relations, namely, equals (q), before (b), meets (m), overlaps (o), contains (c), starts (s), and finished-by (f), which we call the \textit{primary} temporal relations. Let set $U=T \cup I$ represents the thirteen temporal relation labels, where $T=\{q,b,m,o,c,s,f\}$ is the set of labels for the primary temporal relations and $I=\{t^{-1} \ | \ t \in T-\{q\} \}$ is the set of labels for the inverse temporal relations. \\
\begin{table}[]
\centering
\caption{Allen's temporal relations between two event intervals  }
\label{Allen}
\setlength\tabcolsep{1.5pt}
\begin{tabular}{|c|c|c|}
\hline
 \multicolumn{1}{|p{3.1cm}|}{\centering \textbf{ Primary \\ Temporal Relation}}  &\multicolumn{1}{p{3.1cm}|}{\centering \textbf{Inverse \\ Temporal Relation} }&  \multicolumn{1}{p{2.2cm}|}{\raggedleft  \textbf{Pictorial \\ Example} }\\ \hline
$\alpha$ equals $\beta$	& $\beta$ equals $\alpha$ &   \begin{tikzpicture} 
\draw (-1,0) -- node[above] {$\alpha$} ++ (1,0);
\draw(-1,-0.5)-- node[above] {$\beta$} ++(1,0);
\draw [dashed](-1,0.15) -- (-1,-0.65);
\draw [dashed](0,0.15) -- (0,-0.65);
 \end{tikzpicture}

         \\ \hline	
$\alpha$ before $\beta$	& $\beta$ after $\alpha$ &   \begin{tikzpicture} 
\draw (-1,0) -- node[above] {$\alpha$} ++ (1,0)(0.5,0)-- node[above] {$\beta$} ++(1.5,0);
 \end{tikzpicture}

         \\ \hline
$\alpha$ meets $\beta$	& $\beta$ met-by $\alpha$ &   \begin{tikzpicture} 
\draw (-1,0) -- node[above] {$\alpha$} ++ (1,0)(0,-0.5)-- node[above] {$\beta$} ++(1.5,0);
\draw [dashed](0,0.1) -- (0,-0.6);
 \end{tikzpicture}
         \\ \hline	
 $\alpha$ overlaps $\beta$	& $\beta$ overlapped-by $\alpha$ &   \begin{tikzpicture} 
\draw (-1,0) -- node[above] {$\alpha$} ++ (1,0);
\draw(-0.5,-0.5)-- node[above] {$\beta$} ++(1.5,0);
 \end{tikzpicture}
         \\ \hline	  
$\alpha$ contains $\beta$ & $\beta$ during $\alpha$	&   \begin{tikzpicture} 
\draw (-1,0) -- node[above] {$\alpha$} ++ (2,0);
\draw(-0.5,-0.5)-- node[above] {$\beta$} ++(1,0);
 \end{tikzpicture}
         \\ \hline	  
 $\alpha$ starts $\beta$ & $\beta$ startted-by $\alpha$	&   \begin{tikzpicture} 
\draw (-1,0) -- node[above] {$\alpha$} ++ (1,0);
\draw(-1,-0.5)-- node[above] {$\beta$} ++(2.5,0);
\draw [dashed](-1,0.1) -- (-1,-0.6);
 \end{tikzpicture}
         \\ \hline	  
         $\alpha$ finished-by $\beta$	& $\beta$ finishes $\alpha$ &   \begin{tikzpicture} 
\draw (1,-1) -- node[above] {$\beta$} ++ (1,0);
\draw(-0.5,-0.5)-- node[above] {$\alpha$} ++(2.5,0);
\draw [dashed](2,-0.4) -- (2,-1.1);
 \end{tikzpicture}
         \\ \hline	                 
	\end{tabular}
\end{table}

Exactly one of these relations holds between any ordered pair of event intervals. Some event labels may not occur in an e-sequence and some may occur multiple times. For simplicity, we assume the first occurrence of an event label in an e-sequence is more important than the remainder of its occurrences. Therefore, when extracting temporal relations from an e-sequence, only the first occurrence is considered and the rest are ignored. With this assumption, there are at most ${|\Sigma| \choose 2}$ possible pairs of event labels in a dataset.  

Based on Definition \ref{ClassDFN}, we now define a second feature-based representation, which relies on temporal relations.
Let $F={\Sigma \choose 2}$ be the set of all 2-combinations of event labels from $\Sigma$. The feature-values of instance $x_i$ are specified as a vector containing the labels corresponding to the temporal relations between every pair that occurs in an e-sequence $s_i$. In other words, $X=\{x_1,x_2, . . . ,x_n\} \in U^{n \times {|\Sigma| \choose 2}}$, where an instance $x_i \in X$ represents an e-sequence $s_i$.
\begin{exmp}
Following Example \ref{ex1}, Table \ref{temporal} shows the feature-based representation that is constructed based on the temporal relations between the pairs of event labels in the e-sequences given in Table \ref{DB}. To increase readability, 0 is used instead of $\emptyset$ to indicate that no temporal relation exists between the pair. 
\end{exmp}
\begin{table}[ht]
\centering
\caption{Feature-based representation constructed based on temporal relations }
\label{temporal}
\setlength\tabcolsep{1.5pt}
\begin{tabular}{|c|c|c|c|c|c|c|c|c|c|c|c|c|c|c|c|}
\hline
\textbf{A,B}  &\textbf{A,C}  &\textbf{A,D}  &\textbf{A,E}  &\textbf{A,F}  &\textbf{B,C} & \textbf{B,D} & \textbf{B,E}  &\textbf{B,F}  &\textbf{C,D}  &\textbf{C,E}  &\textbf{C,F}  &\textbf{D,E} &\textbf{D,F}&\textbf{E,F} &\textbf{Class} \\ \hline
c & f & 0 & c & 0 & b & 0&b &0 & 0 & c& 0 & 0&0&0 & $+$	 \\ \hline	
0 & f & 0 & c & c & 0 & 0&0 &0 & 0& c & c &0 &0& q& $-$	 \\ \hline	
$s^{-1}$ & c & 0 & c & 0 & b & 0 &b & 0 & 0 & c& 0& 0&0& 0&$+$	 \\ \hline	
c & c & c & c & 0 & b& s & b &0&$b^{-1}$&c&0&$b^{-1}$&0&0& $+$	 \\ \hline	              
	\end{tabular}
\end{table}

\subsection{Feature Selection} 

Feature selection for classification tasks aims to select a subset of features that are highly discriminative and thus contribute substantially to increasing the performance of the classification. Features with less discriminative power are undesirable since they either have little impact on the accuracy of the classification or may even harm it. As well, reducing the number of features improves the efficiency of many algorithms.

Based on their relevance to the targeted classes, features are divided by John et al. \cite{irrelevant} into three disjoint categories, namely, strongly relevant, weakly relevant, and irrelevant features. 
Suppose $f_i \in F$ and $\bar{f_i} = F - \{f_i\}$. Let $P(C \ | \ F)$ be the probability distribution of class labels in $C$ given the values for the features in $F$. The categories of feature relevance can be formalized as follows \cite {redundancy}. 
\begin{defn} (\textit{Strong relevance})
A feature $f_i$ is strongly relevant iff
\begin{equation}
P(C \ | \  f_i,\bar{f_i} \ ) \neq P(C \ | \ \bar{f_i})
\end{equation}
\end{defn}
\begin{defn} (\textit{Weak relevance})
A feature $f_i$ is weakly relevant iff
\begin{equation}
\begin{array}{l}
P(C \ | \ f_i,\bar{f_i}) = P(C \ | \ \bar{f_i}) \text{ and} \\
\exists \ g_i \subset \bar{f_i} \ \text{such that} \ P(C \ | \ f_i,g_i) \neq P(C \ | \ g_i)
\end{array}
\end{equation}
\end{defn}
\begin{corol} (\textit{Irrelevance}) \normalfont  A feature $f_i$ is irrelevant iff 
\begin{equation}
\forall \ g_i \subseteq \bar{f_i}, \ \ P(C \ | \ f_i,g_i) = P(C \ | \ g_i)
\end{equation}
\label{cor}
\end{corol}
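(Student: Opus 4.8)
The plan is to read the corollary as the logical negation of the two preceding definitions. The tacit premise, made explicit by John et al.'s trichotomy, is that strong relevance, weak relevance, and irrelevance partition $F$ into three disjoint and exhaustive categories; under this premise a feature $f_i$ is \emph{irrelevant} exactly when it is neither strongly relevant nor weakly relevant. So the entire argument reduces to negating the defining conditions of strong and weak relevance and combining the results, and it must be carried out in both directions of the biconditional.

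For the forward direction I would first negate strong relevance: failing that definition means $P(C \mid f_i,\bar{f_i}) = P(C \mid \bar{f_i})$. Next I would negate weak relevance. Since weak relevance is a conjunction of the equality $P(C \mid f_i,\bar{f_i}) = P(C \mid \bar{f_i})$ with the existential $\exists\, g_i \subset \bar{f_i}$ making $P(C \mid f_i,g_i) \neq P(C \mid g_i)$, De Morgan turns its negation into a disjunction: either the full-set equality fails, or $P(C \mid f_i,g_i) = P(C \mid g_i)$ for every proper subset $g_i \subset \bar{f_i}$. Because the negation of strong relevance already forces the full-set equality to hold, the first disjunct is impossible, leaving only the universal statement ranging over all proper subsets.

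The key step is then the bookkeeping between $\subset$ and $\subseteq$. The equality obtained from negating strong relevance is precisely the instance $g_i = \bar{f_i}$ of the target universal, while negating weak relevance supplies the same equality for every proper subset $g_i \subset \bar{f_i}$. Merging the improper case with the proper cases yields exactly $\forall\, g_i \subseteq \bar{f_i},\ P(C \mid f_i,g_i) = P(C \mid g_i)$, which is the claimed characterization. For the converse I would instantiate this universal at $g_i = \bar{f_i}$ to obtain the equality that rules out strong relevance, and then observe that the remaining instances over proper subsets directly contradict the existential required for weak relevance; hence $f_i$ is neither strongly nor weakly relevant, i.e.\ irrelevant.

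I expect the main obstacle to be conceptual rather than computational: making explicit the partition assumption that licenses treating ``irrelevant'' as the pure logical complement of ``strongly or weakly relevant,'' and then handling the $\subset$-versus-$\subseteq$ seam cleanly so that the full-set equality from strong relevance and the proper-subset equalities from weak relevance reassemble into a single universal quantifier with neither gap nor overlap.
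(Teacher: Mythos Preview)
Your argument is correct and complete. The paper, however, does not actually supply a proof for this corollary: it is stated immediately after the definitions of strong and weak relevance and is taken as self-evident from the trichotomy of John et al., with the text moving directly on to explanatory prose about what the three categories mean. So there is no ``paper's own proof'' to compare against; you have simply written out explicitly the logical derivation that the paper leaves implicit. Your handling of the partition premise, the De Morgan step, and the $\subset$-versus-$\subseteq$ bookkeeping is exactly what is needed to justify the corollary formally.
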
 
Strong relevance indicates that a feature is indispensable and it cannot be removed without loss of prediction accuracy. Weak relevance implies that the feature can sometimes contribute to prediction accuracy. Features are relevant if they are either strongly or weakly relevant and are irrelevant otherwise. Irrelevant features are dispensable and can never contribute to prediction accuracy. 

Feature selection is beneficial to the quality of the temporal relation representation, especially when there are many distinct event labels in the dataset. Although any feature selection method can be used to eliminate irrelevant features, some methods have advantages for particular representations.
Filter-based selection methods are generally efficient because they assess the relevance of features by examining intrinsic properties of the data prior to applying any classification method.  
We propose a simple and efficient filter-based method to avoid producing irrelevant features for the temporal relation representation. 
\subsection{Filter-based Feature Selection Strategy}
\label{subStrategy}
In this section, we propose a filter-based strategy for feature reduction that can also be used in unsupervised settings. We apply this strategy to avoid producing irrelevant features for the temporal relation representation.

\begin{thm}
\label{theorem1}
An event label $l$ is an irrelevant feature of an e-sequence dataset $D$ if its relative frequencies are equal in every e-sequence in dataset $D$. 
\end{thm}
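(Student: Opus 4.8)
The plan is to reduce the statement to the formal definition of irrelevance given in Corollary \ref{cor}, namely that a feature $f_i$ is irrelevant iff $P(C \mid f_i, g_i) = P(C \mid g_i)$ for every subset $g_i \subseteq \bar{f_i}$. In the relative frequency representation, the feature associated with event label $l$ is precisely the coordinate whose value for the instance $x_i$ (built from e-sequence $s_i$) equals $\mathrm{R}(s_i,l)$. So the first step is simply to restate the hypothesis ``the relative frequencies of $l$ are equal in every e-sequence'' as the statement that this coordinate takes one and the same value $v$ across all $n$ instances; that is, the feature $l$ is a \emph{constant} feature.

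Next I would argue that a constant feature carries no conditioning information. Writing the hypothesis as $f_i \equiv v$ with empirical probability one over the instances of $D$, the event $\{f_i = v\}$ coincides with the entire sample space. Hence for any $g_i \subseteq \bar{f_i}$ the joint conditioning event $\{f_i = v\} \cap \{g_i\}$ equals $\{g_i\}$, and conditioning additionally on $f_i$ leaves the class distribution unchanged:
\begin{equation*}
P(C \mid f_i, g_i) = P(C \mid f_i = v, g_i) = P(C \mid g_i).
\end{equation*}
Since this holds for every $g_i \subseteq \bar{f_i}$, the irrelevance condition of Corollary \ref{cor} is satisfied, and therefore $l$ is an irrelevant feature of $D$.

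The main obstacle I anticipate is not the algebra but making the probabilistic step precise. One must be explicit that the distribution $P(C \mid \cdot)$ is the empirical distribution over the instances of $D$, so that equal relative frequencies in every e-sequence genuinely force the feature to be constant on the support of that distribution; a value that occurs with empirical probability one is exactly what renders the conditioning event trivial. It is also worth flagging that only the stated direction holds: a feature may well be irrelevant without being constant, so the proof should claim the implication in one direction only and not assert an equivalence.
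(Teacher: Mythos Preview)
Your proposal is correct and follows essentially the same route as the paper: translate the hypothesis into the feature $f_i$ being constant on all instances of the relative-frequency representation, observe that conditioning on a constant leaves the class distribution unchanged so that $P(C\mid f_i,g_i)=P(C\mid g_i)$ for every $g_i\subseteq\bar{f_i}$, and invoke Corollary~\ref{cor}. If anything, your treatment is more careful than the paper's, which passes through the intermediate claim $P(C\mid f_i)=P(C)$ before asserting the general equality, whereas you explain directly why the conditioning event is the whole sample space.
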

\begin{proof}
Suppose event label $l$ occurs with equal relative frequencies in every e-sequence in dataset $D$. We construct a feature-based representation $K=(C,F,X)$ based on the relative frequencies of the event labels as previously described. Therefore, there exists a feature $f_i \in F$ that has the constant value of $v$ for all instances $x \in X$. We have $P(C \ | \ f_i)= P(C)$. Therefore, $P(C \ | \ f_i,g_i) = P(C \ | \ g_i)$. According to Corollary \ref{cor}, we conclude $f_i$ is an irrelevant feature.   
\end{proof}

We provide a definition for \textit{support} that is applicable to relative frequency.
 If we add up the relative frequencies of event label $l$ in all e-sequences of dataset $D$ and then normalize the sum, we obtain the support of $l$ in $D$. Formally:
\begin{equation}
sup(D,l)=\frac{1}{n} \sum_{s \in D}R(s,l)
\end{equation} 
where $n$ is the number of e-sequences in $D$. 

The support of an event label can be used as the basis of dimensionality reduction during pre-processing for a classification task. One can identify and discard irrelevant features (event labels) based on their supports. We will now show how the support is used to avoid extracting irrelevant features by the following corollary, which is an immediate consequence of Theorem \ref{theorem1}.
\begin{corol}
\label{corl2}
An event label $l$ whose support in dataset $D$ is 0 or 1 is an irrelevant feature. 
\end{corol}

\begin{proof}
As with the proof of Theorem \ref{theorem1}, assume we construct a feature-based representation $K$ based on the relative frequency of the event labels. If $sup(l,D)=0$ then, there exists a mapping feature $f_i \in F$ that has equal relative frequencies (values) of $0$ for all instances $x \in X$. The same argument holds if $sup(l,D)=1$. According to Theorem \ref{theorem1}, we conclude $f_i$ is an irrelevant feature. 
\end{proof} 

In practice, situations where the support of a feature is exactly 0 or 1 do not often happen. Hence, we propose a heuristic strategy that discards probably irrelevant features based on a confidence interval defined with respect to an error threshold $\epsilon$.       

\paragraph{Heuristic Strategy:}
If $sup(D,l)$ is not in a confidence interval $[0+\epsilon  , 1-\epsilon]$, then event label $l$ is presumably an irrelevant feature in $D$ and can be discarded.

\subsection{Comparison to Representation Based on Frequent Patterns}
In frequent pattern mining, the support of temporal pattern $p$ in a dataset is the number of instances that contain $p$. A pattern is frequent if its support is no less than a predefined threshold set by user. 
Once frequent patterns are discovered, after computationally expensive operations, a subset of frequent patterns are selected as features. The representation contains binary values such that if a selected pattern occurs in an e-sequence
the value of the corresponding feature is 1, and 0 otherwise. 
Example \ref{ex3} illustrates a limitation of classification of IBTSs based on frequent pattern mining where frequent patterns are irrelevant to the class labels.

\begin{exmp}
\label{ex3}
Consider Table \ref{DB} and its feature-based representation constructed based on relative frequency, as shown in Example \ref{ex1}. In this example, the most frequent pattern is A, which has a support of 1. However, according to Corollary \ref{corl2}, A is an irrelevant feature and can be discarded for the purpose of classification.
For this example, a better approach is to classify the e-sequences based on the presence or absence of F such that the occurrence of F in an e-sequence means the e-sequence belongs to the $-$ class and the absence of F means it belongs to the $+$ class.
\end{exmp}

In practice, the large number of frequent patterns affects the performance of the approach in both the pattern discovery step and the feature selection step. Obviously, mining patterns that are later found to be irrelevant, is useless and computationally costly. 

\section{Experiments}
In our experiments, we evaluate the effectiveness of the FIBS framework on the task of classifying interval-based temporal sequences using the well-known random forest classification algorithm on eight real world datasets. 
We evaluate performance of FIBS using classifiers implemented in R version 3.6.1. The FIBS framework was also implemented in R. All experiments were conducted on a laptop computer with a 2.2GHz Intel Core i5 CPU and 8GB memory. We obtain overall classification accuracy using 10-fold cross-validation. We also compare the results for FIBS against those for two
well-known methods, STIFE \cite{2016stife} and IBSM \cite{ibsm}.
In order to see the effect of the feature selection strategy, the FIBS framework was tested with it disabled (FIBS baseline) and with its error threshold $\epsilon$ set to various values.
 
\subsection{Datasets}
\label{DatasetSec} 
Eight real-world datasets from various application domains were used to evaluate the FIBS framework. Statistics concerning these datasets are summarized in Table \ref{StatDB}. More details about the datasets are as follows:
\begin{list}{\textbullet}{}
\item \textbf{ASL-BU} \cite{signLanguage}.
Event intervals correspond to facial or gestural expressions (e.g., head tilt right, rapid head shake, eyebrow raise, etc.) obtained from videos of American Sign Language expressions provided by Boston University. An e-sequence expresses an utterance using sign language that belongs to one of nine classes, such as wh-word, wh-question, verb, or noun. 
\item \textbf{ASL-BU2} \cite{signLanguage}. ASL-BU2 is a newer version of the ASL-BU dataset with improvements in annotation such that new e-sequences and additional event labels have been introduced. As above, an e-sequence expresses an utterance. 
\item \textbf{Auslan2} \cite{morchenSensor}. The e-sequences in the Australian Sign Language dataset contain event intervals that represent words like girl or right. 
\item \textbf{Blocks} \cite{morchenSensor}. Each event interval corresponds to a visual primitive obtained from videos of a human hand stacking colored blocks and describes which blocks are touched as well as the actions of the hand (e.g., contacts blue, attached hand red, etc.). Each e-sequence represents one of eight scenarios, such as assembling a tower.
\item \textbf{Context} \cite{morchenSensor}. Each event interval was derived from categorical and numeric data describing the context of a mobile device carried by a person in some situation (e.g., walking inside/outside, using elevator, etc). Each e-sequence represents one of five scenarios, such as being on a street or at a meeting. 
\item \textbf{Hepatitis} \cite{patelHepatit}. Each event interval represents the result of medical tests (e.g, normal, below or above the normal range, etc) during an interval. Each e-sequence corresponds to a series of tests over a period of 10 years that a patient who has either Hepatitis B or Hepatitis C undergoes.
\item \textbf{Pioneer} \cite{morchenSensor}. Event intervals were derived from the Pioneer-1 dataset available in the UCI repository corresponding to the input provided by the robot sensors (e.g, wheels velocity, distance to object, sonar depth reading, gripper state, etc). Each e-sequence in the dataset describes one of three scenarios: move, turn, or grip.
\item \textbf{Skating} \cite{morchenSensor}. Each event interval describes muscle activity and leg positions of one of six professional In-Line Speed Skaters during controlled tests at seven different speeds on a treadmill. Each e-sequence represents a complete movement cycle, which identifies one of the skaters. 
\end{list}

\begin{table}[H]
\centering
\caption{ Statistical information about datasets }
\label{StatDB}
\begin{tabular}{|c|c|c|c|c|c|c|c| }
\hline
 Dataset & \# e-sequences & \# Event Intervals & \multicolumn{3}{c|}{e-sequence Size} & $|\Sigma|$ & \# Classes   \\ 
  && & 	 min	& max & avg&&      \\ \hline
  ASL-BU  & 873  & 18,250& 4& 41& 18  &216& 9     \\ \hline 
  ASL-BU2 & 1839   & 2,447 &4 &93 &23 &254&  7     \\ \hline 
  Auslan2 & 200   & 2,447 &9 &20 & 12 &12 & 10     \\ \hline 
  Blocks & 210  & 1,207 &3 &12& 6 &8 & 8     \\ \hline
  Context & 240  & 19,355 & 47 &149&  81 & 54& 5      \\ \hline
  Hepatitis & 498  & 53,692 &15 &592& 108& 63 & 2      \\ \hline
  Pioneer & 160 & 8,949 & 36 & 89& 56& 92  & 3      \\ \hline
  Skating & 530  & 23,202 & 27& 143& 44& 41& 6      \\ \hline
\end{tabular}
\end{table}

\subsection{Performance Evaluation}
\label{Performance}
We assess the classification accuracy of FIBS on the set of datasets given in Section \ref{DatasetSec}, which is exactly the same set of datasets considered in work on IBSM and STIFE \cite{ibsm,2016stife}.
For a fair comparison and following \cite{2016stife}, in each case, we apply the random forest algorithm using FIBS to perform classifications.
We adopt the classification results of the IBSM and STIFE methods, as reported in Table 5 in \cite{2016stife}.  

Table \ref{accuracy} shows the mean classification accuracy on the datasets when using FIBS baseline, FIBS with the error threshold $\epsilon$ ranging from 0.01 to 0.03 (using the feature selection strategy defined in Section \ref{subStrategy}), 
STIFE, and IBSM. The best performance in each row is highlighted in bold. 
\begin{table}[htbp]
  \centering
  \caption{Mean classification accuracy of each framework on eight datasets. The last two rows indicate the mean and median results of each method across all datasets.}
    \begin{tabular}{|c|c|c|c|c|c|c|}
    \hline 
    Dataset &FIBS\_Baseline & FIBS\_0.01 & FIBS\_0.02 & FIBS\_0.03 &STIFE & IBSM \\ \hline 
    ASL-BU & \textbf{94.98} & 89.95 & 90.68 & 88.46 & 91.75 & 89.29 \\ \hline 
    ASL-BU2 & \textbf{94.43} & 92.39 & 92.67 & 93.61 & 87.49 & 76.92 \\ \hline 
    Auslan2 & 40.50  & 41.00    & 41.00    & 41.00    & \textbf{47.00} & 37.50 \\ \hline 
    Blocks & 100   & 100   & 100   & 100   & 100   & 100 \\ \hline 
    Context & 97.83 & 98.76 & 98.34 & 98.34 & \textbf{99.58} & 96.25 \\ \hline 
    HEPATITIS & 84.54 & \textbf{85.14} & 83.55 & 83.94 & 82.13 & 77.52 \\ \hline 
    Pioneer & \textbf{100} & \textbf{100} & \textbf{100} & \textbf{100} & 98.12 & 95.00 \\ \hline 
    Skating & 96.73 & 97.93 & 98.31 & \textbf{98.5} & 96.98 & 96.79 \\ \hline 
	\hline    
    Mean  & \textbf{88.63} & 88.15 & 88.07 & 87.98 & 87.88 & 83.66 \\ \hline 
    Median & 95.86 & 95.16 & 95.49 & \textbf{95.98} & 94.37 & 92.15 \\ \hline 
    \end{tabular}%
  \label{accuracy}%
\end{table}%

According to the Wilcoxon signed ranks tests applied across the results from the datasets given in Table \ref{accuracy}, each of the FIBS models has significantly higher accuracy than IBSM at significance level 0.05 (not shown). 
Compared with the STIFE framework, each of the FIBS models outperforms on four datasets, loses on three datasets, and ties on Blocks dataset at 100\%. The Wilcoxon signed ranks tests do not, however, confirm which method is significantly better. 
Overall, the results suggest that FIBS is a strong competitor in terms of accuracy. 
\subsection{Effect of Feature Selection}
The same experiments to classify the datasets using the random forest algorithm, which were given in Section \ref{Performance},
were conducted to determine the computational cost of FIBS with and without the feature selection strategy. 
Fig. \ref{Plots} shows the number of features produced by the frameworks and the execution time of applying the frameworks recorded on a log scale with base 10. The error threshold $\epsilon$ was varied from 0.00 (baseline) to 0.03 by 0.01.
\begin{figure}[H]
\centering
\begin{tabular}[c]{cc}
\begin{subfigure}[c]{\linewidth}
\includegraphics[width=\textwidth , height=0.25\textheight]{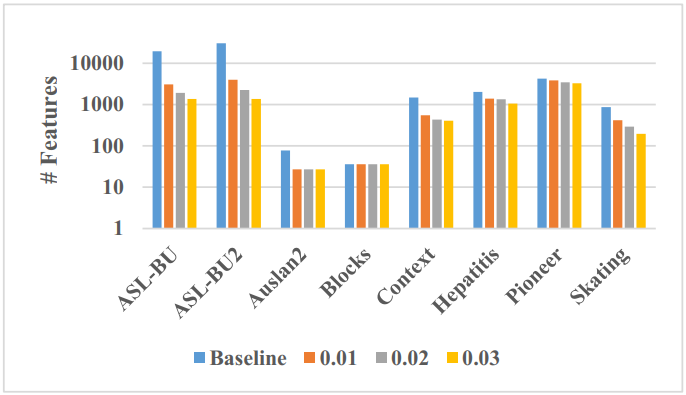}
\subcaption{ Number of features}
\label{Nfeature}
\end{subfigure}
\\
\begin{subfigure}[c]{\linewidth}
\includegraphics[width=\textwidth , height=0.25\textheight]{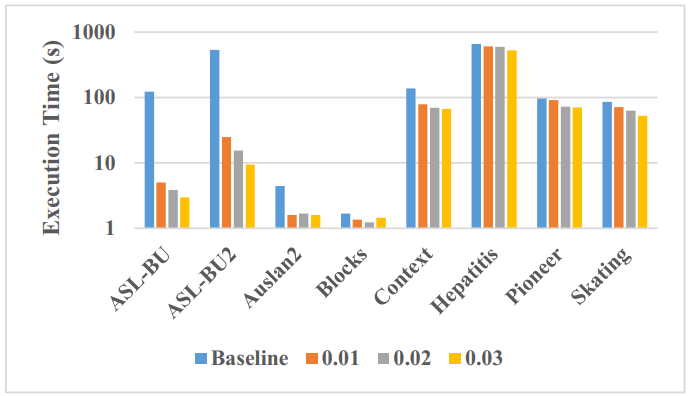}
\subcaption{Execution time (s)}
\label{TFeature}
\end{subfigure}
\end{tabular}  
\caption{Effect of feature selection strategy on the eight datasets based on different error thresholds $\epsilon$:  (\subref{Nfeature}) number of generated features on a log scale with base 10, (\subref{TFeature}) execution time (s) on a log scale with base 10.}
\label{Plots}
\end{figure}

As shown in Fig. \ref{Nfeature}, applying the feature selection strategy reduces the number of features, and consequently decreases the execution time in all datasets (Fig. \ref{TFeature}).
In particular, due to a significant reduction in the number of irrelevant features for ASL-BU and ASL-BU2, applying the FIBS framework with the strategy achieves over an order of magnitude speedup compared to FIBS without the strategy.
As shown by the mean classification accuracy of the models in Table \ref{accuracy}, applying the strategy also either improves the accuracy of the classification or does not have a significant adverse effect on it in all datasets. This result was confirmed by the Wilcoxon signed ranks tests at significance level 0.05 (not shown). 
Overall, the above results suggest that incorporating the feature selection strategy into FIBS is beneficial.
\section{Conclusion}
To date, most attempts to classify interval-based temporal sequences (IBTSs) have been performed in frameworks based on frequent pattern mining. As a simpler alternative, we propose a feature-based framework, called FIBS, for classifying IBTSs. FIBS incorporates two possible representations for features extracted from IBTSs, one based on the relative frequency of the occurrences of event labels and the other based on the temporal relations among the event intervals. Due to the possibility of generating too many features when using the latter representation, we proposed a heuristic feature selection strategy based on the idea of the support for the event labels. The experimental results demonstrated that methods implemented in the FIBS framework can achieve significantly better or similar performance in terms of accuracy when classifying IBTSs compared to the state-of-the-art competitors.
These results provide evidence that the FIBS framework effectively represents IBTS data for classification algorithms.

When extracting temporal relations among multiple occurrences of events with the same label in an e-sequence, FIBS considers only the first occurrences. In the future, the impact of temporal relations among such events could be studied under various assumptions. 

 \bibliographystyle{splncs} 
 
{\let\chapter\section\bibliography{bibfile} 





\end{document}
